\newif\iffull
\newcounter{cntLemmaNumber}
\newcounter{cntTheoremNumber}
\theoremstyle{plain}
\newtheorem{theorem}{Theorem}[section]
\newtheorem{lemma}[theorem]{Lemma}
\newtheorem{claim}[theorem]{Claim}
\theoremstyle{definition}
\theoremstyle{remark}
\newcommand{\eps}{\epsilon}
\newcommand{\accfunc}{acc(W_{i,j}, \bullet)}
\newcommand{\floats}{\mathbb{F}}
\newcommand{\size}[1]{\ensuremath{\left|#1\right|}}
\newcommand{\norm}[1]{\ensuremath{\|#1\|}}
\newcommand{\set}[1]{\left\{ #1 \right\}}
\newcommand{\Gcomment}[1]{{\color{red} #1}}
\DeclarePairedDelimiterX{\infdivx}[2]{(}{)}{%
  #1\;\delimsize\|\;#2%
}
\newcommand{\KL}{D_{KL}\infdivx}
\DeclarePairedDelimiter{\floor}{\lfloor}{\rfloor}
\DeclarePairedDelimiter{\ceil}{\lceil}{\rceil}
\title{SGD Through the Lens of Kolmogorov Complexity}
\author{%
  Gregory Schwartzman \\
  JAIST\\
  \texttt{greg@jaist.ac.jp} \\
}
\begin{document}
\maketitle
\begin{abstract}
We prove that stochastic gradient descent (SGD) finds a solution that achieves $(1-\epsilon)$ classification accuracy on the entire dataset. We do so under two main assumptions: (1. Local progress) The model accuracy improves \emph{on average} over batches. (2. Models compute simple functions) The function computed by the model is simple (has low Kolmogorov complexity). It is sufficient that these assumptions hold only for a \emph{tiny fraction} of the epochs.

Intuitively, the above means that intermittent \emph{local progress} of SGD implies \emph{global progress}. Assumption 2 trivially holds for underparameterized models, hence, our work gives the first convergence guarantee for general, \emph{underparameterized models}. Furthermore, this is the first result which is completely \emph{model agnostic} - we do not require the model to have any specific architecture or activation function, it may not even be a neural network.
Our analysis makes use of the entropy compression method, which was first introduced by Moser and Tardos \cite{Moser09, MoserT10} in the context of the Lovász local lemma. 

\end{abstract}
\section{Introduction} 
Stochastic gradient descent (SGD) is at the heart of modern machine learning. However, we are still lacking a theoretical framework that explains its performance for general, non-convex functions. There are known convergence guarantees for \emph{overparameterized} neural networks \cite{DuLL0Z19, Allen-ZhuLS19, Zou2018, ZouG19}. These works show that if the number of network parameters is considerably larger than the number of data points, then the network is guaranteed to converge to a (near) global optimum. The main drawback of these results is the requirement of exceedingly large (and wide) models, where the size is a large polynomial in $n$ (the size of the dataset). In other words, the analysis is dependent on the \emph{structure} of the model. In this work, we take a different approach and show convergence guarantees based on the non-structural properties of the model.

Our two main assumptions are: (1) The model accuracy improves \emph{on average} over batches following the GD step. That is, if we look at the difference: (batch accuracy \emph{after} GD $-$ batch accuracy \emph{before} GD), \emph{averaged} over an epoch, then it is greater than some positive parameter. (2) The \emph{function} computed by the model is simple (has Kolmogorov complexity sufficiently smaller than $n$). Note that this is different than saying that the model is simple. It might be the case that a very complicated model computes an extremely simple function. A classical example is a high degree polynomial over a finite field that always evaluates to 0. 

For our convergence guarantees to hold, we only require the above assumptions to holds for some \emph{tiny fraction} of the iterations.
Note that Assumption 2 trivially holds when the model is \emph{underparametrized} (number of model parameters is sufficiently smaller than $n$). To the best of our knowledge, this is the first convergence guarantee for general, underparameterized models. Furthermore, this is the first result which is completely \emph{model agnostic}. That is, we don't require the model to have any specific architecture or activation function. It may not even be a neural network. Finally, our results answer an intriguing theoretical question: is it possible for SGD to achieve \emph{local progress} (improvement over batches), but no \emph{global progress} (good accuracy for the entire dataset)? Intuitively, one might think that this is indeed a situation that might occur. However, we show that (under Assumption 2) this cannot be the case.
\paragraph{Overview of our results} 
We consider batch SGD, where the model is initialized randomly and the dataset is shuffled once at the beginning of each epoch and then divided into batches. Note that this paper does not deal with the generalization abilities of the model, thus the dataset is always the training set. In each epoch, the algorithm goes over the batches one by one, and performs gradient descent to update the model. This is the ``vanilla" version of SGD, without any acceleration or regularization (for a formal definition see Section~\ref{sec: prelims}). Furthermore, for the sake of analysis, we add a termination condition after every GD step: if the accuracy on the entire dataset is at least $1-\epsilon$ we terminate. Thus in our case termination implies good accuracy.

We show that under reasonable assumptions, local progress of SGD implies global progress. Let $\eps \in (0,1)$ be an accuracy parameter of our choice and let $X$ be the dataset (every element in $X$ consists of a data point, a label, and some unique ID), where $\size{X}=n$. Let $K(X)$ be the Kolmogorov complexity of $X$ (the shortest Turing machine which outputs $X$). We can also define the Kolmogorov complexity of a function as the shortest Turing machine which computes the function (see Section~\ref{sec: prelims} for a formal definition).
The assumptions we make are formally stated at the beginning of Section~\ref{sec: sgd conv}. Below is an intuitive outline:
\begin{enumerate}[leftmargin=*]
    \item \label{ass: intro local prog} After every GD step the batch accuracy is improved by at least an additive $\eps \beta'$ factor (on average, per epoch), where $\beta'$ is a local progress parameter. That is, fix some epoch and let $y_i$ be the model accuracy on the $i$-th batch before the GD step and let $y'_i$ be the model accuracy after the GD step, then we require that $\frac{1}{t}\sum_{i=1}^t (y'_i - y_i) \geq \eps \beta'$ where $t$ is the number of batches in an epoch.
    
    \item \label{ass: intro kolmogorov} The Kolmogorov complexity of the function computed by the model is at most $(\eps \beta')^{3}n/512$. In other words, either the model is underparameterized, or the function computed by the model is sufficiently \emph{simple}.
    
    \item \label{ass: intro lsmooth} The loss function is differentiable and $L$-smooth. This must also hold when taking numerical stability into account. 
    
    \item \label{ass: intro batch size} The batch size is $\Omega((\beta'\eps)^{-2}\log^2 (n /(\beta'\eps) ))$. Note, the dependence on $n$ is only \emph{polylogarithmic}.

    \item \label{ass: intro element size} The size of every element in $X$ is at most polynomial in $n$. For example, if $X$ contains images, they are not excessively large compared to $n$. Furthermore, the number of model parameters is at most polynomial in $n$ (this is not implied by Assumption~\ref{ass: intro kolmogorov}, as the model can be huge but still compute a simple function).
    
\end{enumerate}

Roughly speaking, we show that if $n$ is sufficiently large and assumptions 3 to 5 always hold and assumptions 1,2 only hold for a $\gamma$-fraction of the epochs ($\gamma \in (0,1]$ need not be a constant), then SGD must achieve an accuracy of at least $(1-\eps)$ on the \emph{entire} dataset within $O(\frac{K(X)}{\gamma(\beta'\eps)^3 n})$ epochs with high probability (w.h.p)\footnote{This is usually taken to be $1-1/n^c$ for some constant $ c>1$. For our case, it holds that $c\in (1,2)$, however, this can be amplified arbitrarily by increasing the batch size by a multiplicative constant.}. Note that the number of epochs does not depend on the size of the model.
Finally, note that when the accuracy is too high the local progress assumption cannot hold anymore (we discuss this further in \iffull Section~\ref{sec: connection eps beta}\else the full version of the paper, provided as supplementary material\fi), and we cannot guarantee progress past this point. 

Let us briefly discuss our assumptions. As for Assumption~\ref{ass: intro local prog}, it is quite natural as the GD step attempts to minimize the loss function which acts as a proxy for the accuracy on the data. 
However, Assumption~\ref{ass: intro kolmogorov} seems at odds with the fact that we assume the model to be initialized \emph{randomly}. Intuitively, one might believe that a random initialization surely implies that Assumption~\ref{ass: intro kolmogorov} does not hold. While it is indeed the case that the \emph{model} has high Kolmogorov complexity in this case, this is not necessarily the case for the \emph{function computed by the model}. Recent literature suggests that randomly initialized models are biased towards \emph{simple functions} rather than complicated ones (see Section 7.3 in \cite{MingardPSL21} for a survey of recent work) and that the function learned by the network has low Kolmogorov complexity \cite{PerezCL19}. Furthermore, our results also generalize if we allow \emph{lossy compression}.
There is a vast literature on model compression (see \cite{nncompression} for a survey of recent literature), where the size of the model can be significantly compressed while preserving its accuracy. This also serves as a strengthening argument towards this assumption. 

Note that assumptions (3)-(5) are rather mild. Assumption~\ref{ass: intro lsmooth} (or some variation of it) is standard in the literature \cite{DuLL0Z19, Allen-ZhuLS19, Zou2018, ZouG19}, and discussions regarding numerical stability are generally omitted. Due to our use of Kolmogorov complexity we need to explicitly state that the L-smoothness assumption holds even when we consider some subset of the real numbers (i.e., floats). We discuss this in more detail in Section~\ref{sec: prelims}. 
In Assumption~\ref{ass: intro batch size} we only require a polylogarithmic dependence on $n$, which is on par with batch sizes used in practice. Furthermore, it is well observed that when the batch size is too small SGD suffers from instability \cite{deeplearningbook}. Finally, Assumption~\ref{ass: intro element size} is extremely mild, as our dependence on the size of the elements / model is only logarithmic when we apply this assumption. That is, even elements / model as large as $n^{1000}$ leave our analysis unchanged asymptotically. 


\paragraph{Outline of our techniques} To achieve our results we make use of \emph{entropy compression}, first considered by Moser and Tardos \cite{Moser09, MoserT10} to prove a constructive version of the Lovász local lemma (the name ``entropy compression" was coined by Terrence Tao \cite{tao}). Roughly speaking, the entropy compression argument allows one to bound the running time of a randomized algorithm\footnote{We require that the number of the random bits used is proportional to the execution time of the algorithm. That is, the algorithm flips coins for every iteration of a loop, rather than just a constant number at the beginning of the execution.} by leveraging the fact that a random string of bits (the randomness used by the algorithm) is computationally incompressible (has high Kolmogorov complexity) w.h.p. If one can show that throughout the execution of the algorithm, it (implicitly) compresses the randomness it uses, then one can bound the number of iterations the algorithm may execute without terminating. To show that the algorithm has such a property, one would usually consider the algorithm after executing $t$ iterations, and would try to show that just by looking at an ``execution log" of the algorithm and some set of ``hints", whose size together is considerably smaller than the number of random bits used by the algorithm, it is possible to reconstruct all of the random bits used by the algorithm. We highly encourage readers unfamiliar with this argument to take a quick look at \cite{fortnow}, which provides an excellent 1-page example.

We apply this approach to SGD with an added termination condition when the accuracy over the entire dataset is sufficiently high, thus termination in our case guarantees good accuracy.
The randomness we compress are the bits required to represent the random permutation of the data at every epoch. So indeed the longer SGD executes, the more random bits are generated. We show that under our assumptions it is possible to reconstruct these bits efficiently starting from the dataset $X$ and the model after executing $t$ epochs. The first step in allowing us to reconstruct the random bits of the permutation in each epoch is to show that under our L-smoothness assumption and a sufficiently small GD step, SGD is \emph{reversible}. That is, if we are given a model $W_{i+1}$ and a batch $B_i$ such that $W_{i+1}$ results from taking a gradient step with model $W_i$ where the loss is calculated with respect to $B_i$, then we can \emph{uniquely} retrieve $W_i$ using only $B_i$ and $W_{i+1}$. This means that if we can efficiently encode the batches used in every epoch (i.e., using less bits than encoding the entire permutation of the data), we can also retrieve all intermediate models in that epoch (at no additional cost). We prove this claim in Section~\ref{sec: prelims}.

The crux of this paper is to show that our assumptions imply that at every epoch the batches can indeed be compressed. Let us consider two toy examples that provide intuition for our analysis. First, consider the scenario where, for every epoch, exactly in the ``middle" of the epoch (after considering half of the batches for the epoch), our model achieves 100\% accuracy on the set of all elements already considered so far during the epoch (elements on the ``left") and 0\% accuracy for the remaining elements (elements on the ``right"). Let us consider a single epoch of this execution.
We can express the permutation of the dataset by encoding two sets (for elements on the left and right), and two permutations for these sets. Given the dataset $X$ and the function computed by the model at the middle of the epoch, we immediately know which elements are on the left and which are on the right by seeing if they are classified correctly or not by the model (recall that $X$ contains data points and labels and that elements have unique IDs). Thus it is sufficient to encode only two permutations, each over $n/2$ elements. While naively encoding the permutation for the entire epoch requires $\ceil{\log (n!)}$ bits, we manage to make do with $2\ceil{\log (n/2)!}$. Using Stirling's approximation ($\log (n!) = n\log (n/e) + O(\log n)$) and omitting logarithmic factors, we get that $\log (n!) - 2\log ((n/2)!) \approx n \log (n/e) - n \log (n/2e) = n$. Thus, we can save a linear number of bits using this encoding. To achieve the above encoding, we need the function computed by the model in the middle of the epoch. Assumption~\ref{ass: intro kolmogorov} guarantees that we can efficiently encode this function. 

Second, let us consider the scenario where, in every epoch, just after a single GD step on a batch we consistently achieve 100\% accuracy on the batch. Furthermore, we terminate our algorithm when we achieve 90\% accuracy on the entire dataset. Let us consider some epoch of our execution, assume we have access to $X$, and let $W_f$ be the model at the end of the epoch. Our goal is to retrieve the last batch of the epoch, $B_f\subset X$ (without knowing the permutation of the data for the epoch). A naive approach would be to simply encode the indices in $X$ of the elements in the batch (we can sort $X$ by IDs). However, we can use $W_f$ to achieve a more efficient encoding. Specifically, we know that $W_f$ achieves 100\% accuracy on $B_f$ but only 90\% accuracy on $X$. Thus it is sufficient to encode the elements of $B_f$ using a smaller subset of $X$ (the elements classified correctly by $W_f$, which has size at most $0.9\size{X}$). This allows us to significantly compress $B_f$. Next, we can use $B_f$ and $W_f$ together with the reversibility of SGD to retrieve $W_{f-1}$. We can now repeat the above argument to compress $B_{f-1}$ and so on, until we are able to reconstruct all of the random bits used to generate the permutation of $X$ in the epoch. This will again result in a linear reduction in the number of bits required for the encoding.

In our analysis, we show generalized versions of the two scenarios above. First, we show that if at any epoch the accuracy of the model on the left and right elements differs significantly (we need not split the data down the middle anymore), we manage to achieve a meaningful compression of the randomness. Next, we show that if this scenario does not happen, then together with our local progress assumption, we manage to achieve slightly better accuracy on the batch we are trying to retrieve (compared to the set of available elements). This results in a compression similar to the second scenario.
Either way, we manage to save a significant (linear in $n$) number of bits for every epoch, which allows us to use the entropy compression argument to bound our running time.

\paragraph{Related work}  An excellent survey of recent work is given in \cite{asafnoy}, which we follow below.
There has been a long line of research proving convergence bounds for SGD under various simplifying assumptions such as: linear networks \cite{AroraCGH19, AroraCH18}, shallow networks \cite{SafranS18,DuL18, Oymak}, etc. However, the most relevant results for this paper are the ones dealing with deep, overparameterized networks \cite{DuLL0Z19, Allen-ZhuLS19, Zou2018, ZouG19}. All of these works make use of NTK (Neural Tangent Kernel)\cite{JacotHG18} and show convergence guarantees for SGD when the hidden layers have width at least $poly(n,L)$ where $n$ is the size of the dataset and $L$ is the depth of the network. We note that the exponents of the polynomials are quite large.
For example, assuming a random weight initialization, \cite{Allen-ZhuLS19} require width at least $n^{24}L^{12}$ and converges to a model with an $\epsilon$ error in $O(n^6L^2 \log (1/\eps))$ steps w.h.p. The current best parameters are due to \cite{ZouG19}, which under random weight initialization, require width at least $n^{8}L^{12}$ and converge to a model with an $\epsilon$ error in $O(n^2L^2 \log (1/\eps))$ steps w.h.p. Recent works \cite{NguyenM20, Nguyen21, asafnoy} achieve a much improved width dependence (linear / quadratic) and running times, however, they require additional assumptions which make their results incomparable with the above. Specifically, in \cite{NguyenM20, Nguyen21} a special (non-standard) weight initialization is required, and in \cite{asafnoy} convergence is shown for a non-standard activation unit.

Our results are not directly comparable with the above, mainly due to assumptions \ref{ass: intro local prog} and \ref{ass: intro kolmogorov}. However, if one accepts our assumptions, our analysis has the benefit of not requiring any specific network architecture. The convergences times are also essentially incomparable due to the dependence on $K(X)$. That is, $K(X)$ can be very small if the elements of $X$ are small and similar to each other, or very large if they are very large and contain noise (e.g., high definition images with noise). 
To the best of our knowledge, this is the first (conditional) global convergence guarantee for SGD for general, non-convex underparameterized models.

    \iffull
    \paragraph{Structure of the paper} We first present some preliminary definitions in Section~\ref{sec: prelims} followed by our main results in Section~\ref{sec: sgd conv}. 
    \else
    \paragraph{Structure of the paper} We first present some preliminary definitions in Section~\ref{sec: prelims} followed by our main results in Section~\ref{sec: sgd conv}. Most of the proofs of Section~\ref{sec: sgd conv} and some auxiliary lemmas from Section~\ref{sec: prelims} are omitted, and appear in the full paper.
    \fi
\section{Preliminaries}
\label{sec: prelims}
We consider the following optimization problem. We are given an input (dataset) of size $n$. Let us denote $X=\set{x_i}_{i=1}^n$ (Our inputs contain both data and labels, we do not need to distinguish them for this work). 
We also associate every $x\in X$ with a unique id of $\ceil{\log n}$ bits. We often consider batches of the input $B \subset X$. The size of the batch is denoted by $b$ (all batches have the same size). We have some model whose parameters are denoted by $W \in \mathbb{R}^d$, where $d$ is the model dimension. 
We aim to optimize a goal function of the following type: $f(W) = \frac{1}{n} \sum_{x \in X} f_x(W) $, where the functions $f_x: \mathbb{R}^d \rightarrow \mathbb{R}$ are completely determined by $x \in X$. We also define for every set $A \subseteq X$: $f_A(W) = \frac{1}{|A|} \sum_{x \in A} f_x(W) $. Note that $f_X= f$.

We denote by $acc(W,A):\mathbb{R}^d \times 2^X \rightarrow \mathbb{R}[0,1]$ the accuracy of model $W$ on the set $A \subseteq X$ (where we use $W$ to classify elements from $X$). Note that for $x\in X$ it holds that $acc(W,x)$ is a binary value indicating whether $x$ is classified correctly or not.
We require that every $f_x$ is differentiable and L-smooth: $\forall W_1,W_2 \in \mathbb{R}^d, \norm{\nabla f_x(W_1) - \nabla f_x(W_2)} \leq L \norm{W_1 - W_2}$.
This implies that every $f_A$ is also differentiable and L-smooth. To see this consider the following:
\begin{align*}
& \norm{\nabla f_A(W_1) - \nabla f_A(W_2)} = \norm{ \frac{1}{|A|} \sum_{x \in A} \nabla f_x(W_1) -  \frac{1}{|A|} \sum_{x \in A} \nabla f_x(W_2)} \\ &= \frac{1}{|A|} \norm{  \sum_{x \in A} \nabla f_x(W_1) - \nabla f_x(W_2)} 
\leq \frac{1}{|A|} \sum_{x \in A} \norm{\nabla f_x(W_1) - \nabla f_x(W_2)} \leq L \norm{W_1 - W_2}    
\end{align*}
We state another useful property of $f_A$:
\begin{lemma}
\label{lem: sgd step rev}
Let $W_1, W_2 \in \mathbb{R}^d$ and $\alpha < 1/L$. For any $A\subseteq X$, if it holds that $W_1 - \alpha \nabla f_A(W_1) = W_2 - \alpha \nabla f_A(W_2)$ then $W_1=W_2$.
\end{lemma}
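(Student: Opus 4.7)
The plan is to prove this by a direct contraction-style argument using the L-smoothness of $f_A$ (which was just established above the statement). Rearranging the hypothesis $W_1 - \alpha \nabla f_A(W_1) = W_2 - \alpha \nabla f_A(W_2)$ gives
\[
W_1 - W_2 = \alpha\bigl(\nabla f_A(W_1) - \nabla f_A(W_2)\bigr).
\]
Taking norms on both sides and applying the L-smoothness bound for $f_A$ yields
\[
\norm{W_1 - W_2} = \alpha\,\norm{\nabla f_A(W_1) - \nabla f_A(W_2)} \leq \alpha L\,\norm{W_1 - W_2}.
\]
Since $\alpha < 1/L$, we have $\alpha L < 1$, so the inequality $\norm{W_1 - W_2} \leq \alpha L\,\norm{W_1 - W_2}$ can only hold when $\norm{W_1 - W_2} = 0$, i.e. $W_1 = W_2$.

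There is essentially no obstacle here: the only ingredients are the L-smoothness inequality (already derived for $f_A$ earlier in this section) and the strict inequality $\alpha L < 1$ coming from the hypothesis on the step size. The lemma is the standard fact that the map $W \mapsto W - \alpha \nabla f_A(W)$ is injective (in fact a diffeomorphism onto its image) whenever the step size is below the inverse smoothness constant, and this injectivity is precisely what the paper needs later to argue that an SGD step is reversible given the batch used.
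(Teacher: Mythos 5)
Your proposal is correct and uses the same argument as the paper: rearrange, take norms, apply $L$-smoothness, and conclude from $\alpha L < 1$ that $\norm{W_1 - W_2} = 0$. The paper phrases the final step as a strict inequality $\norm{W_1-W_2} < \norm{W_1-W_2}$ (a contradiction unless the norm is zero), but this is just a cosmetic difference from your ``can only hold when the norm is zero'' phrasing.
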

\begin{proof}
Rearranging the terms we get that $W_1 - W_2  = \alpha \nabla f_A(W_1) - \alpha \nabla f_A(W_2)$. Now let us consider the norm of both sides:
$\norm {W_1 - W_2}  = \norm{\alpha \nabla f_A(W_1) - \alpha \nabla f_A(W_2)} \leq \alpha \cdot L \norm {W_1 - W_2} < \norm {W_1 - W_2}$
Unless $W_1 = W_2$, the final strict inequality holds which leads to a contradiction. 
\end{proof}
The above means that for a sufficiently small gradient step, the gradient descent process is reversible. That is, we can always recover the previous model parameters given the current ones, assuming that the batch is fixed. We use the notion of reversibility throughout this paper. However, in practice we only have finite precision, thus instead of $\mathbb{R}$ we work with the finite set $\floats \subset \mathbb{R}$. Furthermore, due to numerical stability issues, we do not have access to exact gradients, but only to approximate values $\widetilde{\nabla f_A}$. For the rest of this paper, we assume these values are L-smooth on all elements in $\floats^d$. That is, 
$$\forall W_1,W_2 \in \floats^d, A\subseteq X, \norm{ \widetilde{\nabla f_A}(W_1) - \widetilde{\nabla f_A}(W_2)} \leq L \norm{W_1 - W_2}$$
This immediately implies that Lemma~\ref{lem: sgd step rev} holds even when precision is limited. Let us state the following theorem:
\begin{theorem}
\label{thm: get orig params}
Let $W_1, W_2,...,W_k \in \floats^d \subset \mathbb{R}^d$, $A_1, A_2,...,A_k \subseteq X$ and $\alpha < 1/L$. 
If it holds that $W_i = W_{i-1}- \alpha \widetilde{\nabla f_{A_{i-1}}}(W_{i-1})$, then given $A_1, A_2,...,A_{k-1}$ and $W_k$ we can retrieve $W_1$.
\end{theorem}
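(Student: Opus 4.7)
The plan is to prove the theorem by induction on $k$, peeling one gradient step at a time from the tail of the trajectory. The base case $k=1$ is trivial since $W_1$ is already given. For the inductive step, suppose we can retrieve $W_1$ from $A_1,\dots,A_{k-2}$ and $W_{k-1}$; then it suffices to show that we can recover $W_{k-1}$ from $A_{k-1}$ and $W_k$, after which we apply the inductive hypothesis.

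To recover $W_{k-1}$ from $A_{k-1}$ and $W_k$, I would exploit Lemma~\ref{lem: sgd step rev} together with the fact that we are working over the \emph{finite} set $\floats^d$ rather than all of $\mathbb{R}^d$. Concretely, consider the map $\Phi : \floats^d \to \floats^d$ defined by $\Phi(W) = W - \alpha\,\widetilde{\nabla f_{A_{k-1}}}(W)$. By Lemma~\ref{lem: sgd step rev} (which as noted carries over to the finite-precision setting because the approximate gradient $\widetilde{\nabla f_A}$ is assumed to be $L$-smooth on $\floats^d$), the map $\Phi$ is injective: if $\Phi(W_1')=\Phi(W_2')$ for $W_1',W_2'\in\floats^d$, then $W_1'=W_2'$. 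Hence there is at most one preimage of $W_k$ under $\Phi$, and by the hypothesis of the theorem $W_{k-1}$ is such a preimage, so $W_{k-1}$ is the \emph{unique} element of $\floats^d$ satisfying $\Phi(W_{k-1}) = W_k$.

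To actually retrieve $W_{k-1}$ algorithmically, enumerate all $W\in\floats^d$ (this is a finite set because $\floats$ is a finite floating-point set), compute $\Phi(W)$ using $A_{k-1}$, and return the unique $W$ with $\Phi(W)=W_k$. This procedure is obviously not efficient, but the theorem only asserts the information-theoretic fact that $W_{k-1}$ is determined by $(A_{k-1},W_k)$; no runtime claim is made here, and indeed the entropy compression argument later in the paper only needs this reconstructability to bound the Kolmogorov complexity of the trajectory, not to compute it quickly.

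Iterating this reconstruction $k-1$ times yields $W_{k-1},W_{k-2},\dots,W_1$ from $W_k$ and $A_1,\dots,A_{k-1}$. The only subtlety, and the ``main obstacle'' if any, is the reliance on $\floats^d$ being finite for the search to terminate; over $\mathbb{R}^d$ one would additionally need a constructive inverse for $\Phi$, but the paper's numerical-stability assumption (smoothness of $\widetilde{\nabla f_A}$ on $\floats^d$) sidesteps this cleanly and leaves the argument essentially an immediate corollary of Lemma~\ref{lem: sgd step rev}.
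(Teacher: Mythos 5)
Your proof is correct and takes essentially the same route as the paper's: apply Lemma~\ref{lem: sgd step rev} to conclude that the map $W \mapsto W - \alpha\,\widetilde{\nabla f_{A_{k-1}}}(W)$ is injective on $\floats^d$, enumerate the finite set $\floats^d$ to find the unique preimage of $W_k$, and iterate backwards to recover $W_1$. The paper phrases the iteration directly rather than as an induction, but the argument is identical.
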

\begin{proof}
Given $W_k$ we iterate over all $W\in \floats^d$ until we find $W$ such that $W_k = W - \alpha \widetilde{\nabla f_{A_{i-1}}}(W)$. Using Lemma~\ref{lem: sgd step rev}, there is only a single element such that this equality holds, and thus $W=W_{k-1}$. We repeat this process until we retrieve $W_1$.
\end{proof}
We note that the assumption that L-smoothness holds for the approximate gradients on all elements in $\floats^d$ might seem a bit strong. However, if we assume that the gradients have norm bounded by a parameter $G$ (i.e. $f_A$ is $G$-Lipschitz), then it holds that $\norm{W_i - W_{i-1}} \leq \alpha G$. This means that in Theorem~\ref{thm: get orig params}, we need to only go over elements that are close to $W_i$ when retrieving $W_{i-1}$. This in turn means that we can require that the Lipschitz condition for $\widetilde{\nabla f_{A_{i-1}}}$ only holds around $W_i$.

\paragraph{SGD}
We analyze the classic SGD algorithm presented in Algorithm~\ref{alg: sgd}. One difference to note in our algorithm, compared to the standard implementation, is the termination condition when the accuracy on the dataset is sufficiently high. In practice the termination condition is not used, however, we only use it to prove that at \emph{some point} in time the accuracy of the model is indeed sufficiently large. It is also possible to check this condition on a sample of a logarithmic number of elements of $X$ and achieve a good approximation to the accuracy w.h.p. 
\RestyleAlgo{boxruled}
\LinesNumbered
\begin{algorithm}[htbp]
	\DontPrintSemicolon
	\caption{SGD}
	\label{alg: sgd}
	
	$i \gets 1$ // epoch counter\\
	$W_{1,1}$ is an initial model\\
	\While{True}
	{
	    Take a random permutation of $X$, divided into batches $\set{B_{i,j}}_{j=1}^{n/b}$\\
	    \For{j from 1 to n/b}{
	    \lIf{$acc(W_{i,j}, X) \geq (1-\eps)$}
	    {
	        Return $W_{i,j}$
	    }
	        $W_{i,j+1} \gets W_{i,j} - \alpha \nabla f_{B_{i,j}}(W_{i,j}) $\\
	    
	    }
	    $i \gets i + 1$, $W_{i, 1} \gets W_{i-1,n/b+1}$\\
	}
\end{algorithm}
\paragraph{Kolmogorov complexity}
The Kolmogorov complexity of a string $x \in \{0,1\}^*$, denoted by $K(x)$, is defined as the size of the smallest prefix Turing machine which outputs this string. We note that this definition depends on which encoding of Turing machines we use. However, one can show that this will only change the Kolmogorov complexity by a constant factor \cite{LiV19}.
Random strings have the following useful property \cite{LiV19}:
\begin{theorem}
\label{thm: kolmogorov lb}
 For an $n$ bit string $x$ chosen uniformly at random and any $c\in \mathbb{N}$ it holds that $Pr[K(x) \geq n-c] \geq 1- 1/2^c$. 
\end{theorem}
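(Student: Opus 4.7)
The plan is to use a direct counting argument: upper-bound the number of $n$-bit strings that admit short Turing machine descriptions. The key insight is that programs are themselves binary strings and execution is deterministic, so each program outputs at most one value; the count of short descriptions therefore bounds the count of compressible strings.

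First I would enumerate binary programs of length strictly less than $n-c$. The number of such programs is at most $\sum_{k=0}^{n-c-1} 2^k = 2^{n-c} - 1$. (For prefix-free encodings this could be sharpened via Kraft's inequality, but the simple geometric sum already suffices.) Next, let $S := \{x \in \{0,1\}^n : K(x) < n-c\}$. Since each halting program outputs at most one string, $|S|$ is bounded by the number of programs of length less than $n-c$, hence $|S| < 2^{n-c}$.

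Finally, because $x$ is drawn uniformly from $\{0,1\}^n$,
$$\Pr[K(x) < n - c] \;=\; \frac{|S|}{2^n} \;<\; \frac{2^{n-c}}{2^n} \;=\; 2^{-c},$$
and taking the complementary event gives $\Pr[K(x) \geq n-c] > 1 - 2^{-c}$, which is exactly the claimed bound.

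The only real subtlety, hardly an obstacle, is that $K(\cdot)$ is defined relative to some fixed universal (prefix) Turing machine, but the counting argument is insensitive to this choice: all that is used is that programs are binary strings of bounded length and that execution is deterministic. No tool beyond pigeonhole / union bound is required, which is why this statement is typically quoted as a folklore fact (as the citation \cite{LiV19} suggests).
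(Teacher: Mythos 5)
Your proof is correct, and it is the standard counting argument for this fact. The paper does not actually prove Theorem~\ref{thm: kolmogorov lb}; it simply cites Li and Vit\'anyi, and the argument given there is exactly the one you present: there are fewer than $2^{n-c}$ programs of length below $n-c$, each outputs at most one string, so fewer than $2^{n-c}$ of the $2^n$ strings of length $n$ can have complexity below $n-c$, giving $\Pr[K(x) < n-c] < 2^{-c}$. Your remark about prefix-freeness is well taken but indeed unnecessary here, since the geometric sum already furnishes the needed upper bound on the number of candidate programs.
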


We also use the notion of conditional Kolmogorov complexity, denoted by $K(x \mid y)$. This is the length of the shortest prefix Turing machine which gets $y$ as an auxiliary input and prints $x$. Note that the length of $y$ does not count towards the size of the machine which outputs $x$. So it can be the case that $\size{x} \ll \size{y}$ but it holds that $K(x \mid y) < K(x)$. We can also consider the Kolmogorov complexity of functions. Let $g: \set{0,1}^* \rightarrow \set{0,1}^*$ then $K(g)$ is the size of the smallest Turing machine which computes the function $g$.

The following properties of Kolmogorov complexity will be of use \cite{LiV19}. Let $x,y,z$ be three strings:
\begin{itemize}
\item Extra information: $K(x \mid y,z) \leq K(x \mid z) + O(1) \leq K(x,y \mid z) + O(1)$
    \item Subadditivity: $K(xy \mid z) \leq K(x \mid z,y) + K(y \mid z) + O(1) \leq K(x \mid z) + K(y \mid z) + O(1)  $
    
\end{itemize}

\iffull
 Throughout this paper we often consider the value $K(A)$ where $A$ is a set. Here the program computing $A$ need only output the elements of $A$ (in any order). When considering $K(A \mid B)$ such that $A \subseteq B$, it holds that $K(A \mid B) \leq \ceil{\log \binom{\size{B}}{\size{A}}} + O(\log \size{B})$. To see why, consider Algorithm~\ref{alg: set}.
\RestyleAlgo{boxruled}
\LinesNumbered
\begin{algorithm}[htbp]
	\DontPrintSemicolon
	\caption{Compute $A$ given $B$ as input}
	\label{alg: set}
	$m_A \gets \size{A}, m_B \gets \size{B},i \gets 0, i_A$ is a target index\\
	\For{every subset $C\subseteq B$ s.t $\size{C} = m_A$ (in a predetermined order)}
	{
	    \lIf{$i = i_A$}{Print $C$}
	    $i \gets i+1$\\
	}
\end{algorithm}
In the algorithm $i_A$ is the index of $A$ when considering some ordering of all subsets of $B$ of size $\size{A}$. Thus $\ceil{\log \binom{\size{B}}{\size{A}}}$ bits are sufficient to represent $i_A$. The remaining variables $i, m_A, m_B$ and any additional variables required to construct the set $C$ are all of size at most $O(\log \size{B})$ and there is at most a constant number of them.

During our analysis, we often bound the Kolmogorov complexity of tuples of objects. For example, $K(A,P \mid B)$ where $A \subseteq B$ is a set and $P:A \rightarrow [\size{A}]$ is a permutation of $A$ (note that $A,P$ together form an ordered tuple of the elements of $A$). 
Instead of explicitly presenting a program such as Algorithm~\ref{alg: set}, we say that if $K(A \mid B) \leq c_1$ and $c_2$ bits are sufficient to represent $P$, thus $K(A,P \mid B) \leq c_1+c_2 + O(1)$. This just means that we directly have a variable encoding $P$ into the program that computes $A$ given $B$ and uses it in the code. For example, we can add a permutation to Algorithm~\ref{alg: set} and output an ordered tuple of elements rather than a set. Note that when representing a permutation of $A, \size{A}=k$, instead of using functions, we can just talk about values in $\ceil{\log k!}$. That is, we can decide on some predetermined ordering of all permutations of $k$ elements, and represent a permutation as its number in this ordering.

\paragraph{Entropy and KL-divergence}
Our proofs make extensive use of binary entropy and KL-divergence. In what follows we define these concepts and provide some useful properties.

\emph{Entropy}: For $p\in [0,1]$ we denote by $h(p) = -p \log p - (1-p)\log (1-p)$ the entropy of $p$. Note that $h(0)=h(1)=0$. 

\emph{KL-divergence}: For $p,q \in [0,1]$ let $\KL{p}{q} = p \log \frac{p}{q} + (1-p) \log \frac{1-p}{1-q}$ be the Kullback Leibler divergence (KL-divergence) between two Bernoulli distributions with parameters $p, q$. We also state the following useful private case of Pinsker's inequality: $\KL{p}{q} \geq 2(p-q)^2$.


\begin{restatable}{lemma}{entropyub}
\label{lem: entropy ub}
For $p\in [0,1]$ it holds that $h(p) \leq p \log (e/p)$.
\end{restatable}
\iffull
\begin{proof}
Let us write our claim as:
$$h(p) = -p \log p - (1-p)\log (1-p) \leq p \log (e/p)$$
Rearranging we get:
\begin{align*}
     &- (1-p)\log (1-p) \leq p \log p + p \log (1/p) + p\log e
     \\ & \implies - (1-p)\log (1-p) \leq  p\log e
     \\ & \implies -\ln(1-p) \leq  \frac{p}{(1-p)}  
\end{align*}

Note that $- \ln(1-p) = \int_0^{p} \frac{1}{(1-x)}dx \leq p\cdot\frac{1}{(1-p)}$.
Where in the final transition we use the fact that $\frac{1}{(1-x)}$ is monotonically increasing on $[0,1]$. This completes the proof.
\end{proof}
\fi

\begin{restatable}{lemma}{entropyformula}
\label{lem: entropy formula}
For $p,\gamma,q \in [0,1]$ where $p\gamma \leq q, (1-p)\gamma \leq (1-q)$ it holds that 
$$q h(\frac{p\gamma}{q }) + (1-q) h(\frac{(1-p)\gamma}{ (1-q)} ) \leq h(\gamma)- \gamma \KL{p}{q}$$
\end{restatable}
\iffull
\begin{proof}
Let us expand the left hand side using the definition of entropy:
\begin{align*}
     &q h(\frac{p\gamma}{q }) + (1-q) h(\frac{(1-p)\gamma}{ (1-q)} )
     \\ &=-q (\frac{p\gamma}{q }\log \frac{p\gamma}{q } + (1-\frac{p\gamma}{q })\log (1-\frac{p\gamma}{q })) 
     \\ &- (1-q)(\frac{(1-p)\gamma}{ (1-q)}\log \frac{(1-p)\gamma}{ (1-q)} + (1-\frac{(1-p)\gamma}{ (1-q)})\log (1-\frac{(1-p)\gamma}{ (1-q)}))
     \\ &=-(p\gamma\log \frac{p\gamma}{q } + (q -p\gamma)\log \frac{q -p\gamma}{q}) 
     \\ &- ((1-p)\gamma \log \frac{(1-p)\gamma}{ (1-q)} + ((1-q)-(1-p)\gamma)\log \frac{(1-q)-(1-p)\gamma} {1-q})
     \\&=-\gamma \log \gamma - \gamma \KL{p}{q}
     \\ &-(q -p\gamma)(\log \frac{q -p\gamma}{q}) 
     - ((1-q)-(1-p)\gamma)\log \frac{(1-q)-(1-p)\gamma}{1-q})
\end{align*}
Where in the last equality we simply sum the first terms on both lines.
To complete the proof we use the log-sum inequality for the last expression. The log-sum inequality states that: Let $\set{a_k}^m_{k=1}, \set{b_k}^m_{k=1}$ be \emph{non-negative} numbers and let $a=\sum^m_{k=1} a_k, b=\sum^m_{k=1} b_k$, then $\sum_{k=1}^m a_i \log \frac{a_i}{b_i} \geq a \log \frac{a}{b}$.
We apply the log-sum inequality with $m=2, a_1=q -p\gamma, a_2=(1-q)-(1-p)\gamma, a=1-\gamma$ and $b_1=q, b_2=1-q, b=1$,
getting that:
$$(q -p\gamma)(\log \frac{q -p\gamma}{q}) + ((1-q)-(1-p)\gamma)\log \frac{(1-q)-(1-p)\gamma}{1-q})\geq (1-\gamma)\log (1-\gamma)$$
Putting everything together we get that
\begin{align*}
    &-\gamma \log \gamma - \gamma \KL{p}{q}
     \\ &-(q -p\gamma)(\log \frac{q -p\gamma}{q}) 
     - ((1-q)-(1-p)\gamma)\log \frac{(1-q)-(1-p)\gamma}{1-q})
     \\ &\leq -\gamma \log \gamma - (1-\gamma)\log (1-\gamma) - \gamma \KL{p}{q} = h(\gamma)- \gamma \KL{p}{q}
\end{align*}

\end{proof}
\fi
\paragraph{Representing sets}
Let us state some useful bounds on the Kolmogorov complexity of sets.

\begin{restatable}{lemma}{bitsforset}
\label{lem: bits for set}
Let $A \subseteq B, \size{B}=m, \size{A} = \gamma m$, and let $g:B \rightarrow \set{0,1}$. For any set $Y\subseteq B$ let $Y_1 = \set{x \mid x\in Y, g(x)=1}, Y_0 = Y \setminus Y_1$ and $\kappa_Y = \frac{\size{Y_1}}{\size{Y}}$. It holds that 
$$K(A \mid B, g) \leq m(h(\gamma) -  2\gamma(\kappa_B-\kappa_A)^2)  + O(\log m) $$
\end{restatable}
\iffull
\begin{proof}
The algorithm is very similar to Algorithm~\ref{alg: set}, the main difference is that we must first compute $B_1, B_0$ from $B$ using $g$, and select $A_1, A_0$ from $B_1, B_0$, respectively, using two indices $i_{A_1}, i_{A_0}$. Finally we print $A=A_1\cup A_0$. We can now bound the number of bits required to represent $i_{A_1}, i_{A_0}$.
Note that $\size{B_1}=\kappa_B m, \size{B_0}=(1-\kappa_B) m$. Note that for $A_1$ we pick $\gamma \kappa_A m$ elements from $\kappa_B m$ elements and for $A_0$ we pick $\gamma (1-\kappa_A) m$ elements from $(1-\kappa_B) m$ elements.
The number of bits required to represent this selection is:
\begin{align*}
    & \ceil{\log \binom{\kappa_B m}{\gamma \kappa_A m}} + \ceil{\log \binom{(1-\kappa_B) m}{\gamma (1-\kappa_A) m}} 
    \leq \kappa_B m h(\frac{\gamma \kappa_A}{ \kappa_B}) + (1-\kappa_B) m h(\frac{\gamma (1-\kappa_A)}{ (1-\kappa_B)}) 
    \\ &\leq m(h(\gamma) - \gamma \KL{\kappa_B}{\kappa_A}) \leq m(h(\gamma) - 2\gamma(\kappa_B-\kappa_A)^2)
\end{align*}
Where in the first inequality we used the fact that $\forall 0\leq k \leq n, \log \binom{n}{k} \leq n h(k/n)$, Lemma~\ref{lem: entropy formula} in the second transition, and the lower bound on the KL-divergence in the third transition. The additional $O(\log m)$ factor is due to various counters and variables, similarly to Algorithm~\ref{alg: set}.
\end{proof}
\fi
\paragraph{Concentration bounds} We state the following Hoeffding bound for sampling without replacement. 
\begin{theorem}[Hoeffding \cite{hoeffding1994probability, chvatal1979tail}]
\label{thm: hoeffding no rep}
Let $\chi = (y_1,...,y_N)$ be a finite population of size $N$, such that $\forall i, y_i \in \{0,1\}$, and let $(Y_1,...,Y_k)$ be a random sample drawn without replacement from $\chi$ (first $Y_1$ is chosen uniformly at random from $\chi$, then $Y_2$ is chosen uniformly at random from $\chi \setminus \set{Y_1}$ and so on). Let $\mu = \frac{1}{N}\sum_{y \in \chi} y$.
For all $ 0 \leq \delta \leq \mu k$ it holds that: $P(\frac{1}{k}\sum_{i=1}^k Y_k - \mu \leq -\delta) \leq e^{ -2k\delta^2}$.
\end{theorem}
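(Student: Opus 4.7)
The plan is to reduce the sampling-without-replacement statement to the analogous bound for i.i.d.\ Bernoulli variables, which then follows from the standard Chernoff-Hoeffding argument. By symmetry, only the upper-tail bound needs to be proven (the lower-tail bound follows by applying the upper-tail bound to the variables $1 - y_i$, whose mean is $1 - \mu$). So fix $\delta > 0$ and let $S_k = \sum_{i=1}^k Y_i$. I would begin with Markov's inequality applied to the exponential moment: for any $t > 0$,
\[
P(S_k - k\mu \geq k\delta) = P(e^{t S_k} \geq e^{t(k\mu + k\delta)}) \leq e^{-t(k\mu + k\delta)} \, \mathbb{E}[e^{t S_k}].
\]

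The key step, which I expect to be the main obstacle to a fully self-contained write-up, is Hoeffding's classical reduction lemma: if $Z_1, \ldots, Z_k$ are i.i.d.\ draws from $\chi$ (sampling \emph{with} replacement), then for every continuous convex $\phi$,
\[
\mathbb{E}[\phi(S_k)] \leq \mathbb{E}\bigl[\phi\bigl(\textstyle\sum_{i=1}^k Z_i\bigr)\bigr].
\]
I would prove this by the standard swap argument, writing $\phi(S_k)$ as an average over all $k$-subsets of the population, expressing the i.i.d.\ case as a mixture over multisets, and then using convexity of $\phi$ applied coordinate-wise to see that the without-replacement version averages the extreme terms that the with-replacement mixture keeps separate. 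Taking $\phi(x) = e^{tx}$ gives $\mathbb{E}[e^{t S_k}] \leq \mathbb{E}[e^{t \sum_i Z_i}] = \bigl(\mathbb{E}[e^{t Z_1}]\bigr)^k$.

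With the reduction in place, the rest is routine. The $Z_i$ are Bernoulli with mean $\mu$, so by Hoeffding's lemma for bounded variables in $[0,1]$,
\[
\mathbb{E}[e^{t(Z_1 - \mu)}] \leq e^{t^2/8}.
\]
Plugging this back yields $P(S_k - k\mu \geq k\delta) \leq e^{-tk\delta + kt^2/8}$. Optimizing over $t$ by choosing $t = 4\delta$ gives the exponent $-2k\delta^2$, exactly as claimed. The lower tail is handled identically after replacing $y_i$ with $1 - y_i$, which preserves the $\{0,1\}$ structure and flips the direction of the deviation. The hypothesis $\delta \leq \mu k$ in the theorem statement is only needed to keep the deviations within the feasible range of the empirical mean and plays no role in the MGF calculation itself.
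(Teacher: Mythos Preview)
The paper does not give its own proof of this theorem: it is stated as a known concentration bound with citations to Hoeffding and Chv\'atal, and is used as a black box in the proof of Claim~\ref{claim: batch acc lb}. Your proposal is the standard argument (essentially Hoeffding's original 1963 proof): Chernoff's method, the convex-domination reduction from sampling without replacement to i.i.d.\ sampling, and Hoeffding's lemma for $[0,1]$-valued variables, optimized at $t = 4\delta$. The sketch is correct; the only soft spot is the description of the reduction lemma (``the standard swap argument \ldots averages the extreme terms''), which is vague as written---the clean proof conditions on the multiset drawn with replacement and observes that the without-replacement sum is a conditional expectation of the with-replacement sum, then applies Jensen. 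But this is a well-known lemma and the outline is fine.
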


One important thing to note is that if we have access to $\chi$ as a shuffled tuple (or shuffled array), then taking any subset of $k$ indices, is a random sample of size $k$ drawn from $\chi$ without repetitions. 
Furthermore, if we take some sample $Y$ from the shuffled population $\chi$ and then take a second sample $Z$ (already knowing $Y$) from the remaining set $\chi \setminus Y$, then $Z$ is sampled without repetitions from $\chi \setminus Y$.

\fi
\section{Convergence guarantees for SGD}
\label{sec: sgd conv}
In this section, we prove that Algorithm~\ref{alg: sgd} eventually terminates. That is, a $(1-\eps)$ accuracy for the entire dataset is achieved. Let us start by defining some notation, followed by formal definitions of our assumptions.
\begin{figure}[h]
	\centering
	\includegraphics[width=0.9\textwidth]{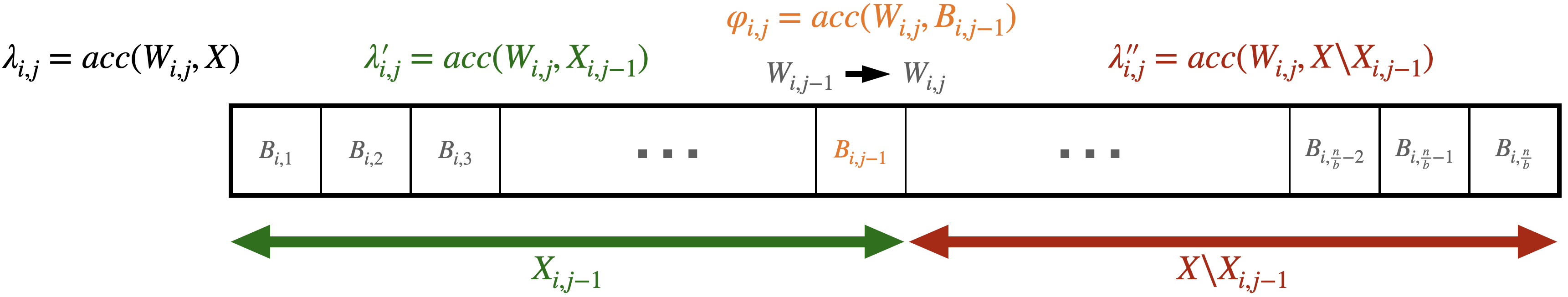}
	\caption{A visual summary of our notations.}
	\label{fig: defs}
\end{figure}

\paragraph{Notations} First let us define some useful notations:
\begin{itemize}[leftmargin=*]
    \item $\lambda_{i,j} = acc(W_{i,j}, X)$. This is the accuracy of the model in epoch $i$ on the entire dataset $X$, \emph{before} performing the GD step on batch $j$.
    \item $\varphi_{i,j}= acc(W_{i,j}, B_{i,j-1})$. This is the accuracy of the model on the $(j-1)$-th batch in the $i$-th epoch \emph{after} performing the GD step on the batch.
    \item $X_{i,j} =\bigcup_{k=1}^{j} B_{i,k}$ (note that $\forall i,X_{i,0} = \emptyset, X_{i,n/b} = X$). This is the set of elements in the first $j$ batches of epoch $i$.
    Let us also denote $n_{j} = \size{X_{i,j}} = jb$ (Note that $\forall j,i_1,i_2, \size{X_{i_1,j}} = \size{X_{i_2,j}}$, thus $i$ need not appear in the subscript).
    \item $\lambda'_{i,j}=acc(W_{i,j}, X_{i,j-1}), \lambda''_{i,j}=acc(W_{i,j}, X \setminus X_{i,j-1})$, where $\lambda'_{i,j}$ is the accuracy of the model on the set of all previously seen batch elements, \emph{after} performing the GD step on the $(j-1)$-th batch and $\lambda''_{i,j}$ is the accuracy of the same model, on all remaining elements ($j$-th batch onward). To avoid computing the accuracy on empty sets, $\lambda'_{i,j}$ is defined for $j\in [2,n/b+1]$ and $\lambda''_{i,j}$ is defined for $j\in [1,n/b]$. 
    
\end{itemize}


\paragraph{Assumptions}  In our analysis we consider $t$ epochs of the SGD algorithm, where we bound $t$ later. 
Let us state the assumptions we make (later we discuss ways to further alleviate these assumptions).

\begin{enumerate}[leftmargin=*]
    
    
    \item \label{ass: local progress} Local progress: There exists some constant $\beta'$ such that \sloppy{$\forall i\in [t], \frac{b}{n}\sum_{j=2}^{n/b+1} (\varphi_{i,j} - acc(W_{i,j-1}, B_{i,j-1})) > \beta' \epsilon$}. That is, we assume that in every epoch, the accuracy of the model on the batch after the GD step increases by an additive $\beta' \epsilon$ factor on average. Let us denote $\beta = \beta' \eps$ 
    \iffull
    (we explain the reason for this parameterization in Section~\ref{sec: connection eps beta}).
    \else
    (we explain the reason for this parameterization in full version of the paper).
    \fi
    
    \item \label{ass: compressible models} Models compute \emph{simple} functions: $\forall i \in [t], j\in[n/b], K(\accfunc \mid X) \leq \frac{n\beta^3}{512}$ (where $\accfunc$ is the accuracy function of the model with parameters $W_{i,j}$). This is actually a slightly weaker condition than bounding the Kolmogorov complexity of the function computed by the model. 
    That is, given $X$ and the function computed by the model we can compute $\accfunc$, but not the other way around. For simplicity of notation, we assume that $W_{i,j}$ contains not only the model parameters, but also the architecture of the model and the indices $i,j$.  
    
    \item \label{ass: batch lb} $b \geq 8 \beta^{-2}\ln^2 (tn^3)$. That is, the batch must be sufficiently large. After we bound $t$, we get that $b = \Theta(\beta^{-2} \log^2 (n/\beta))$ is sufficient.
    
    \item \label{ass: lsmooth} $\forall x \in X, f_x$ is differentiable and L-smooth, even when taking numerical stability into account.
    \item \label{ass: data size ub} $K(X) = n^{O(1)}, d=n^{O(1)}$. That is, the size of the model and the size of every data point is at most polynomial in $n$. We note that the exponent can be extremely large, say $1000$, and this will only change our bounds by a constant.

\end{enumerate}

\paragraph{Bounding $t$:}
Our goal is to use the entropy compression argument to bound the running time of the algorithm. That is, we show that if our assumptions hold for a consecutive number of $t$ epochs, if $t$ is sufficiently large then our algorithm must terminate. Let $r_i$ be the string of random bits representing the random permutation of $X$ at epoch $i$. As we consider $t$ epochs, let $r=r_1r_2\dots r_t$.
Note that the number of bits required to represent an arbitrary permutation of $[n]$ is given by:
$\ceil{\log (n!)} = n\log n - n\log e + O(\log n) = n \log (n/e) + O(\log n)$.
Where in the above we used Stirling's approximation.
Thus, it holds that $|r| = t(n\log (n/e) + O(\log n))$ and according to Theorem~\ref{thm: kolmogorov lb}, with probability at least $1-1/n^2$ it holds that $K(r) \geq tn \log (n/e) - O(\log n)$.

Our goal for the rest of the section is to show that if our assumptions hold, then in every epoch we manage to compress $r_i$, which in turn results in a compression of $r$, leading to an upper bound on $t$ due to the incompressibility of $r$. 
For the rest of the section, we assume that all of our assumptions hold and that the algorithm does not terminate within the first $t$ epochs.
As discussed in the introduction, our proof consists of two cases. For the first case, we show that if at some point during the epoch $|\lambda'_{i,j}-\lambda''_{i,j}|$ is sufficiently large, then $r_i$ can be compressed. For the second case, if $|\lambda'_{i,j}-\lambda''_{i,j}|$ is small throughout the epoch and show that $r_i$ can be compressed by encoding every batch individually. 

\paragraph{Case 1: $|\lambda'_{i,j}-\lambda''_{i,j}|$ is large} 
In the introduction, we considered the case where during the epoch the model achieves 100\% accuracy on half the elements and 0\% on the other half. Let us show a generalized version of this statement, where instead of considering the model during the middle of the epoch, we consider the model after seeing a $\eta$-fraction of the data. The statement below holds for an arbitrary difference in the accuracies of the model.
Instead of considering $|\lambda'_{i,j}-\lambda''_{i,j}|$ directly, we first show an auxiliary compression claim involving $|\lambda'_{i,j}-\lambda_{i,j}|$ and $|\lambda''_{i,j}-\lambda_{i,j}|$. This will immediately allow us to derive the desired result for $|\lambda'_{i,j}-\lambda''_{i,j}|$, and will also be useful later on. 
Let us state the following claim.
\begin{restatable}{claim}{boundsubsetaccs}
\label{claim: bound subset accs}
Let $i\in[t],j\in[2,n/b-1],a\in \mathbb{R},\eta = (j-1)b/n$. It holds that:
\[
K(r_i \mid \accfunc, X) \leq \begin{cases}
        n(\log (n/e) - \eta 2a^2) + O(\log n), & \text{if } |\lambda'_{i,j} - \lambda_{i,j} |\geq a\\
        n(\log (n/e) - (1-\eta) 2a^2) + O(\log n), & \text{if } |\lambda''_{i,j} - \lambda_{i,j} |\geq a
\end{cases}
\]
\end{restatable}
\iffull
\begin{proof}
To encode $r_i$ we encode it as 2 separate strings, split at the $((j-1)b)$-th index. To encode each such string it is sufficient to encode the sets $X_{i,j-1}$ and $X \setminus X_{i,j-1}$, accompanied by a permutation for each set. As $X$ is given, we can immediately deduce $r_i$ given an ordering of the elements of $X$. Note that knowing $X_{i,j-1}$ and $X$ we immediately know $X\setminus X_{i,j-1}$. Thus it is sufficient to only encode $X_{i,j-1}$. Finally, we need to encode two permutations for the two sets. The number of bits required for the two permutations is bounded by:
\begin{align*}
&\ceil{\log (\eta n)!} + \ceil{\log ((1-\eta) n)!} \leq \eta n\log \frac{\eta n}{e} + (1-\eta) n\log \frac{(1-\eta) n}{e} + O(\log n)
\\ &=  n(\log \frac{n}{e} + \eta \log \eta + (1-\eta) \log (1-\eta)) + O(\log n) = n\log \frac{n}{e} - nh(\eta) + O(\log n)
\end{align*}
To encode the set $X_{i,j-1}$ we use Lemma~\ref{lem: bits for set} where $A=X_{i,j-1},B=X$ and $g(x)=acc(W_{i,j}, x)$. 
We get that
\[
K(X_{i,j-1} \mid X, \accfunc)
\leq  n(h(\eta) -2(\lambda'_{i,j}-\lambda_{i,j})^2) + O(\log n) \leq n(h(\eta) -\eta 2a^2) + O(\log n)
\]
Summing the two expressions we get that
\[
K(r_i \mid \accfunc, X) \leq n\log (n/e) - \eta a\cdot \KL{\lambda'_{i,j}}{\lambda_{i,j}}+ O(\log n) \leq n(\log (n/e) - \eta 2a^2)+ O(\log n)
\]
Encoding $X \setminus X_{i,j-1}$ instead of $X_{i,j-1}$ and using an identical argument, we get a similar bound for $\lambda''_{i,j}$.
\end{proof}
\fi
Note that in the above if $\eta = (j-1)b/n$ is too small or too large, we are unable to get a meaningful compression. Let us focus our attention on a sub-interval of $j\in[2,n/b-1]$ where we get a meaningful bound. Let us denote by $j_s = \ceil{\beta n/8b} +1, j_f=\floor{(1-\beta/8) n/b} +1$ the start and finish indices for this interval.
We state the following Corollary which completes the proof of the first case.
\begin{restatable}{corollary}{boundsubsetaccdiff}
\label{col: bound subset acc diff}
If it holds for some epoch $i$, and some $j\in [j_s, j_f]$ that $|\lambda'_{i,j} - \lambda''_{i,j} |\geq \beta/4$, then $K(r_i \mid \accfunc, X) \leq n(\log (n/e) - \frac{1}{256}\beta^3) + O(\log n)$.
\end{restatable}
\iffull
\begin{proof}
Assume w.l.o.g that $\lambda'_{i,j} \leq \lambda''_{i,j}$. Note that it holds that $\lambda_{i,j} \in [\lambda'_{i,j}, \lambda''_{i,j}]$. This is because we can write $\lambda_{i,j} = q\lambda'_{i,j}+ (1-q)\lambda''_{i,j}$ for some $ q\in [0,1]$. Thus if $|\lambda'_{i,j} - \lambda''_{i,j} |\geq \beta/4$ then either $|\lambda_{i,j} - \lambda''_{i,j} |\geq \beta/8$ or $|\lambda_{i,j} - \lambda'_{i,j} |\geq \beta/8$. 
We apply Claim~\ref{claim: bound subset accs} for either $\lambda'_{i,j}$ or $\lambda''_{i,j}$ with parameter $a=\beta/8$. Noting that $j\in [j_s, j_f]$ we get that $\eta, (1-\eta) \geq \beta/8$. Putting everything together we get:
$K(r_i \mid \accfunc, X) \leq n(\log (n/e) - \frac{1}{256}\beta^3) + O(\log n)$
\end{proof}
\fi
\paragraph{Generalizing Assumption~\ref{ass: compressible models}} The assumption is only required for this case of the proof, and the proofs of Claim~\ref{claim: bound subset accs} and Crollary~\ref{col: bound subset acc diff} still go through even if we only have a sufficiently good \emph{approximation} to $\accfunc$. That is, there exists a function $g': X \rightarrow \set{0,1}$, which achieves a sufficiently close approximation to the accuracy of $\accfunc$ on $X \setminus X_{i,j}$ and $X_{i,j}$ (say up to an additive $\beta /100$ factor). Thus, we only need the existence of a compressible function which \emph{approximates} $\accfunc$ on $X \setminus X_{i,j}$ and $X_{i,j}$.
\paragraph{Case 2: $|\lambda'_{i,j} - \lambda''_{i,j}|$ is small}  Let us now consider the second case, where $|\lambda'_{i,j} - \lambda''_{i,j}|$ is small throughout the epoch (more precisely, when $j\in[j_s, j_f]$). Roughly speaking, we would like to get an effect similar to that seen in the introduction, when we assumed we can achieve 100\% accuracy for every batch. The reason we managed to achieve compression in that case, was that we could use the model to reduce the set of potential elements the batch can be taken from when going backwards and reconstructing the batches. We will show that we can achieve a similar effect here. 
Assume we have the entire dataset, $X$, and the model which resulted from a GD step on the $(j-1)$-th batch. The argument is inductive, thus let us assume that we have reconstructed all batches $B_{i,k}, k\geq j$. This means we also know $ X \setminus X_{i,j-1}$. Knowing $X$ and $ X \setminus X_{i,j-1}$, we immediately know $X_{i,j-1}$, the set of all elements which can appear in $B_{i,j-1}$. We aim to show that the local progress assumption (Assumption~\ref{ass: local progress}) implies that (on average) $W_{i,j}$ achieves a slightly higher accuracy on $B_{i,j-1}$ than on $X_{i,j-1}$, which will allow us to achieve a meaningful compression.

Let us first show that if the batch is large enough, the accuracy of $W_{i,j}$ on $B_{i,j}$ is sufficiently close to $\lambda''_{i,j}$ (the accuracy of $W_{i,j}$ on $X \setminus X_{i,j-1}$). We state the following claim:
\begin{restatable}{claim}{batchacclb}
\label{claim: batch acc lb}
It holds w.h.p that: $\forall i\in [t], j\in[2,n/b-1], acc(W_{i,j}, B_{i,j}) \geq \lambda''_{i,j} - \beta/4$
\iffull
\begin{align*}
    &\forall i\in [t], j\in[2,n/b-1], acc(W_{i,j}, B_{i,j}) \geq \lambda''_{i,j} - \beta/4
\end{align*}
\fi
\end{restatable}
\iffull
\begin{proof}

First let us note that if $\lambda''_{i,j} \leq \beta/4$, then the claim holds trivially: $acc(W_{i,j}, B_{i,j}) \geq 0 \geq \lambda''_{i,j} - \beta/4$. So for the rest of the proof let us assume that $\lambda''_{i,j} \geq \beta/4$.

 Let us denote by $\set{Y_\ell}_{\ell=1}^b$, the set of binary indicator variables indicating whether the $\ell$-th element in $B_{i,j}$ was classified correctly by $W_{i,j}$. As stated before, we can say that elements in $B_{i,j}$ are chosen uniformly at random without repetitions from $X \setminus X_{i,j-1}$. Furthermore, it is important to note that $B_{i,j}$ is independent of $W_{i,j}$. We can imagine that $B_{i,j}$ is selected \emph{after} $W_{i,j}$ is determined.
 We can write $acc(W_{i,j}, B_{i,j}) = \frac{1}{b}\sum_{\ell=1}^b Y_\ell$. Applying a Hoeffding bound with parameters $ \mu = \lambda''_{i,j}, k=b, \delta = \beta/4 \geq \mu k$, we get that:
$$Pr[acc(W_{i,j}, B_{i,j})  \leq \lambda''_{i,j} - \beta/4] \leq e^{-2k\delta^2}= e^{-b\frac{\beta^2}{8}}$$

According to Assumption~\ref{ass: batch lb} it holds that $b \geq \frac{8 \ln (tn^3)}{\beta^2}$. Thus, we get that: $$\forall i\in [t],j\in[2,n/b-1], Pr[acc(W_{i,j}, B_{i,j})  < \lambda''_{i,j} - \beta/4] \leq 1/(n^3t)$$
We take a union bound over all bad events (at most $tn$) to get that with probability at least $1-1/n^2$ none of the bad events occur throughout the $t$ epochs of the execution. 

\end{proof}
\fi
That is, w.h.p the accuracy of $W_{i,j-1}$ on $B_{i,j-1}$ is at least its accuracy on $X \setminus X_{i,j-1}$, up to an additive $\beta/4$ factor. Next we use the fact that $\forall \ell\in [j_s,j_f],|\lambda'_{i,\ell} - \lambda''_{i,\ell}|$ is small together with the claim above to show that the accuracy of $W_{i,j-1}$ on $B_{i,j-1}$ is also not too far from the accuracy of $W_{i,j-1}$ on $X_{i,j-1}$. Finally, using the fact that on average $W_{i,j}$ has a sufficiently larger accuracy on $B_{i,j}$ compared to $W_{i,j-1}$ we prove the following claim. Note that this is an auxiliary claim, and the quantity $\sum_{j=2}^{n/b+1} (\varphi_{i,j} - \lambda'_{i,j})^2$ will appear later as the number of bits we manage to save per epoch, by using the model at every epoch to encode the batches. 
\begin{restatable}{claim}{sumofsquareslb}
\label{claim: sum of squares lb}
 If $\forall i \in [t], j\in[j_s,j_f], \size{\lambda'_{i,j} - \lambda''_{i,j}} \leq \beta/4$ then w.h.p:
 $\forall i\in [t], \sum_{j=2}^{n/b+1} (\varphi_{i,j} - \lambda'_{i,j})^2 \geq \frac{n\beta^2}{25b}$.
\end{restatable}
\iffull
\begin{proof}
Recall that $\varphi_{i,j}= acc(W_{i,j}, B_{i,j-1})$. This is the accuracy of the model on the $(j-1)$-th batch in the $i$-th epoch \emph{after} performing the GD step on the batch. Let us consider the following inequality:
\begin{align*}
    &1 \geq \lambda''_{i,j_f} -\lambda''_{i,j_s} = \sum_{j=j_s+1}^{j_f} (\lambda''_{i,j} - \lambda''_{i,j-1}) 
    \\ &= \sum_{j=j_s+1}^{j_f} (\lambda''_{i,j} - \lambda''_{i,j-1} + \varphi_{i,j} - \varphi_{i,j} )
    = \sum_{j=j_s+1}^{j_f} (\lambda''_{i,j} - \varphi_{i,j}) +\sum_{j=j_s+1}^{j_f} (\varphi_{i,j} - \lambda''_{i,j-1})   
\end{align*}

Using Claim~\ref{claim: batch acc lb} we know that w.h.p:
$$\sum_{j=j_s+1}^{j_f} \lambda''_{i,j-1} \leq  \sum_{j=j_s+1}^{j_f} (acc(W_{i,j-1}, B_{i,j-1}) + \frac{\beta}{4}) \leq  \sum_{j=j_s+1}^{j_f} acc(W_{i,j-1}, B_{i,j-1}) + \frac{\beta n}{4b}$$
So we can write:
\begin{align*}
&\sum_{j=j_s+1}^{j_f} (\varphi_{i,j} - \lambda''_{i,j-1}) \geq \sum_{j=j_s+1}^{j_f} (\varphi_{i,j} - acc(W_{i,j-1}, B_{i,j-1})) - \frac{\beta n}{4b} 
\\ &= \sum_{j=2}^{n/b+1} (\varphi_{i,j} - acc(W_{i,j-1}, B_{i,j-1})) - \sum_{j\in [2,j_s]\cup[j_f+1, n/b+1]} (\varphi_{i,j} - acc(W_{i,j-1}, B_{i,j-1})) - \frac{\beta n}{4b}
\\&\geq \sum_{j=2}^{n/b+1} (\varphi_{i,j} - acc(W_{i,j-1}, B_{i,j-1})) - \frac{\beta n}{2b} \geq 
\frac{\beta n}{b} - \frac{\beta n}{2b} = \frac{\beta n}{2b}  
\end{align*}

Where in the last line we use Assumption~\ref{ass: local progress} together with the fact that $(\varphi_{i,j} - acc(W_{i,j-1}, B_{i,j-1})) \in [-1,1]$ and 
\begin{align*}
    & \size{[2,j_s]} + \size{[j_f +1, n/b+1]} = j_s-1 + (n/b+1 - (j_f+1) + 1) = j_s + n/b -j_f   
    \\ &= \ceil{(\beta/8)\frac{n}{b}}+1 + (n/b - \floor{\frac{(1-\beta/8) n}{b}}-1))
    \leq (\beta/8)\frac{n}{b}+2+ (n/b - \frac{(1-\beta/8) n}{b}-2) \leq \frac{\beta n}{4b}
\end{align*}

Combining everything together we get that:
$$1 \geq \sum_{j=j_s+1}^{j_f} (\lambda''_{i,j} - \varphi_{i,j})  +\frac{\beta n}{2b} \implies \sum_{j=j_s+1}^{j_f} (\varphi_{i,j} - \lambda''_{i,j}) \geq \frac{\beta n}{2b} - 1$$

Let us use the above to derive a lower bound for $\sum_{j=j_s+1}^{j_f} (\lambda'_{i,j} - \varphi_{i,j})$. Let us write:
\begin{align*}
    &\sum_{j=j_s+1}^{j_f} (\lambda''_{i,j} - \varphi_{i,j}) = \sum_{j=j_s+1}^{j_f} (\lambda'_{i,j} - \varphi_{i,j}) + \sum_{j=j_s+1}^{j_f} (\lambda''_{i,j} - \lambda'_{i,j}) 
    \\ &\implies \sum_{j=j_s+1}^{j_f} (\lambda'_{i,j} - \varphi_{i,j}) \geq \frac{\beta n}{2b} - 1  -  \sum_{j=j_s+1}^{j_f} (\lambda''_{i,j} - \lambda'_{i,j}) \geq \frac{\beta n}{4b} - 1 \geq  \frac{\beta n}{5b}
\end{align*}

Where in the second to last transition we use the assumption that $\forall i \in [t], j\in[j_s,j_f], \size{\lambda'_{i,j} - \lambda''_{i,j}} \leq \beta/4$, and in the final transition we assume $n$ is sufficiently large.
Taking the square of both sides we get that:
$(\sum_{j=j_s+1}^{j_f}   \lambda'_{i,j} - \varphi_{i,j})^2 \geq (\frac{\beta n}{5b})^2$.

Finally, applying Cauchy-Schwartz inequality, we get that:
$$\sum_{j=2}^{n/b+1} (\varphi_{i,j} - \lambda'_{i,j})^2 \geq \sum_{j=j_s+1}^{j_f} (\varphi_{i,j} - \lambda'_{i,j})^2 \geq  (j_f-j_s)(\sum_{j=j_s+1}^{j_f} \varphi_{i,j} - \lambda'_{i,j})^2 \geq \frac{b}{(1-\beta/4)n}\cdot (\frac{\beta n}{5b})^2 \geq \frac{n\beta^2}{25b}$$

\end{proof}
\fi
To complete the proof for the second case we use the above claim to show that we can significantly compress all batches, this, in turn, will result in a compression of the entire permutation. We state the following claim. 
\begin{restatable}{claim}{boundkaccdiffsmall}
\label{claim: bound k acc diff small}
It holds that if $\forall i \in [t], j\in[j_s,j_f], \size{\lambda'_{i,j} - \lambda''_{i,j}} \leq \beta/4$ then w.h.p: 
$ K(r_i \mid W_{i+1,1}, X) \leq n\log \frac{n}{e} - \frac{2n\beta^2}{25} + o(n)  $
\end{restatable}
\iffull
\begin{proof}
Recall that $B_{i,j}$ is the $j$-th batch in the $i$-th epoch, and let $P_{i,j}$ be a permutation of $B_{i,j}$ such that the order of the elements in $B_{i,j}$ under $P_{i,j}$ is the same as under $r_i$. Note that given $X$, if we know the partition into batches and all permutations, we can reconstruct $r_i$. According to Theorem~\ref{thm: get orig params}, given $W_{i,j}$ and $B_{i,j-1}$ we can compute $W_{i,j-1}$. Let us denote by $Y$ the encoding of this procedure. To implement $Y$ we need to iterate over all possible vectors in $\floats^d$ and over batch elements to compute the gradients. To express this program we require auxiliary variables of size at most $O(\log \min \set{d, b}) = O(\log n)$. Thus it holds that $K(Y) = O(\log n)$.
Let us abbreviate $B_{i,1}, B_{i,2},...,B_{i,j}$ as $ (B_{i,k})_{k=1}^{j}$. We write the following.
\begin{align*}
    &K(r_i \mid X, W_{i+1,1}) \leq K(r_i, Y \mid X, W_{i+1,1}) + O(1) \leq K(r_i \mid X, W_{i+1,1}, Y) + K(Y \mid X, W_{i+1,1})+ O(1) 
    \\&\leq  O(\log n) + K((B_{i,k})_{k=1}^{n/b} \mid X, W_{i+1,1}, Y) + \sum_{j=1}^{n/b} K(P_{i,j}) 
\end{align*}

Let us bound $K((B_{i,k})_{k=1}^{n/b} \mid X, W_{i+1,1}, Y)$ by repeatedly using the subadditivity and extra information properties of Kolmogorov complexity.  
\begin{align*} 
&K((B_{i,k})_{k=1}^{n/b} \mid X, Y, W_{i+1,1}) \leq K(B_{i,n/b} \mid X, W_{i+1,1}) + K((B_{i,k})_{k=1}^{n/b-1} \mid X,Y, W_{i+1,1}, B_{i,n/b}) + O(1)
    \\ &\leq K(B_{i,n/b} \mid X, W_{i+1,1}) + K((B_{i,k})_{k=1}^{n/b-1} \mid X, Y, W_{i,n/b}, B_{i,n/b}) + O(1)
    \\ &\leq  
    K(B_{i,n/b} \mid X, W_{i+1,1}) + 
    K(B_{i,n/b-1} \mid X, W_{i,n/b}, B_{i,n/b})
    \\&+
    K((B_{i,k})_{k=1}^{n/b-2} \mid X, Y, W_{i,n/b-1}, B_{i,n/b}, B_{i,n/b-1}) + O(1)
    \\& \leq...\leq
    O(\frac{n}{b}) + \sum_{j=1}^{n/b} K(B_{i,j} \mid X, W_{i,j+1},
    (B_{i,k})_{k=j+1}^{n/b})
    \leq O(\frac{n}{b}) + \sum_{j=1}^{n/b} K(B_{i,j} \mid  X_{i,j}, W_{i,j+1})
\end{align*}

Where in the transitions we used the fact that given $W_{i,j},B_{i,j-1}$ and $Y$ we can retrieve $W_{i,j-1}$. That is, we can always bound $K(...\mid Y,W_{i,j},B_{i,j-1},...)$ by $K(...\mid Y,W_{i,j-1},B_{i,j-1},...) + O(1)$. 

To encode the order $P_{i,j}$ inside each batch, $b\log (b/e) + O(\log b)$ bits are sufficient. Finally we get that: $K(r_i \mid X, W_{i+1,1}) \leq O(\frac{n}{b}) +  \sum_{j=1}^{n/b} [K(B_{i,j} \mid  X_{i,j}, W_{i,j+1}) + b\log (b/e) + O(\log b)]$.

Let us now bound $K(B_{i,j-1} \mid X_{i,j-1}, W_{i,j})$. 
Knowing $X_{i,j-1}$ we know that $B_{i,j-1} \subseteq X_{i,j-1}$. Thus we need to use $W_{i,j}$ to compress $B_{i,j-1}$. Applying Lemma~\ref{lem: bits for set} with parameters $A=B_{i,j-1}, B=X_{i,j-1}, \eta = b/n_{j-1}, \kappa_{A}=\varphi_{i,j}, \kappa_{B}=\lambda'_{i,j}$ and $g(x)=acc(W_{i,j}, x)$. We get the following:
\begin{align*}
&K(B_{i,j-1} \mid X_{i,j-1}, W_{i,j}) 
\leq n_{j-1}(h(\frac{b}{n_{j-1}}) - \frac{2b}{n_{j-1}}(\varphi_{i,j}-\lambda'_{i,j})^2)+ O(\log n_{j-1}) \\ &= n_{j-1}h(\frac{b}{n_{j-1}}) - 2b(\varphi_{i,j}-\lambda'_{i,j})^2 + O(\log n_{j-1})    
\end{align*}

Adding $b\log (b/e) + O(\log b)$ to the above, we get the following bound on every element in the sum: 
\begin{align*}
     &n_{j-1}h(\frac{b}{n_{j-1}}) - 2b(\varphi_{i,j}-\lambda'_{i,j})^2 + b\log (b/e) + O(\log b)+ O(\log n_{j-1})
    \\ &\leq b \log \frac{e n_{j-1}}{b}+ b\log (b/e) + O(\log n_{j-1}) - 2b(\varphi_{i,j}-\lambda'_{i,j})^2  
    \\&= b\log n_{j-1} -2b(\varphi_{i,j}-\lambda'_{i,j})^2 + O(\log n_{j-1}) 
\end{align*}

Where in the second line we use Lemma~\ref{lem: entropy ub}. 
Note that the most important term in the sum is $-2b(\varphi_{i,j}-\lambda'_{i,j})^2$. That is, the more the accuracy of $W_{i,j}$ on the batch, $B_{i,j-1}$, differs from the accuracy of $W_{i,j}$ on the set of elements containing the batch, $X_{i,j-1}$, we can represent the batch more efficiently. Let us now bound the sum: $\sum_{j=2}^{n/b+1} [b\log n_{j-1} -2b(\varphi_{i,j}-\lambda'_{i,j})^2 + O(\log n_{j-1})] $.
Let us first bound the sum over $b \log n_{j-1}$:
\begin{align*}
    &\sum_{j=2}^{n/b+1} b \log n_{j-1} = \sum_{j=1}^{n/b} b \log jb 
    = \sum_{j=1}^{n/b} b (\log b + \log j)
    \\&= n\log b + b\log (n/b)!
    = n\log b + n\log \frac{n}{b\cdot e} + O(\log n) = n\log \frac{n}{e} + O(\log n)
\end{align*}
Applying Claim~\ref{claim: sum of squares lb} we get that $2b\sum_{j=2}^{n/b+1} (\varphi_{i,j}-\lambda'_{i,j})^2 \geq \frac{2n\beta^2}{25}$.
Finally we can write that:
\begin{align*}
&K(r_i \mid X, W_{i+1,1}) \leq O(\frac{n}{b}) + \sum_{j=2}^{n/b+1} [b \log n_{j-1} -2b(\varphi_{i,j}-\lambda'_{i,j})^2 +O(\log n)] \\&\leq n\log \frac{n}{e} - \frac{2n\beta^2}{25} + \frac{n}{b}\cdot O(\log n) = n\log \frac{n}{e} - \frac{2n\beta^2}{25} + o(n)    
\end{align*}
Where $\frac{n}{b}\cdot O(\log n)=o(n)$ because $b = \omega(\log n)$.
\end{proof}
\fi

This completes the analysis of the second case. As for every epoch either case 1 or case 2 hold, we can conclude that we always manage to compress $r_i$. Combining the claim above with Corollary~\ref{col: bound subset acc diff} we get:
\begin{restatable}{corollary}{riub}
\label{col: ri ub}
It holds w.h.p that
$\forall i\in [t], K(r_i \mid X, W_{i+1,1}) \leq n(\log (n/e) - \frac{1}{512}\beta^3) + O(\log n)$.
\end{restatable}
\iffull
\begin{proof}
    In every epoch either the conditions of Corollary~\ref{col: bound subset acc diff} or Claim~\ref{claim: bound k acc diff small} hold. If the conditions of Corollary~\ref{col: bound subset acc diff} hold for some $j\in [j_s,j_f]$ then we can write:
    \begin{align*}
    &K(r_i \mid X, W_{i+1,1}) \leq K(r_i \mid X, W_{i+1,1}, \accfunc) + K(\accfunc \mid X) + O(1)
    \\ &\leq \frac{n}{512}\beta^3 + n(\log (n/e) - \frac{1}{256}\beta^3) + O(\log n) \leq n(\log (n/e) - \frac{1}{512}\beta^3) + O(\log n)    
    \end{align*}
    Where we use Assumption~\ref{ass: compressible models} to bound $K(\accfunc \mid X)$. If the conditions of Claim~\ref{claim: bound k acc diff small} hold, then we immediately get:
    $$K(r_i \mid X, W_{i+1,1}) \leq n\log \frac{n}{e} - \frac{2n\beta^2}{25} + o(n) \leq n(\log (n/e) - \frac{1}{512}\beta^3) + O(\log n)$$
\end{proof}
\fi
We use the fact that for every epoch that the algorithm does not terminate we manage to significantly compress $r_i$ together with the fact that $r$ is incompressible to bound $t$. We state the following claim:

\begin{restatable}{claim}{boundt}
It holds w.h.p that $t \leq O(\frac{K(X)}{n \beta^3})$.
\end{restatable}

\begin{proof}
\iffull
 Similarly to the definition of $Y$ in Claim~\ref{claim: bound k acc diff small}, let $Y'$ be the program which receives $X,r_i, W_{i+1,1}$ as input and repeatedly applies Theorem~\ref{thm: get orig params} to retrieve $W_{i,1}$. As $Y'$ just needs to reconstruct all batches from $X, r_i$ and call $Y$ for $n/b$ times, it holds that $K(Y') = O(\log n)$.
 \else 
 Let $Y'$ be the program which receives $X,r_i, W_{i+1,1}$ as input and repeatedly applies Theorem~\ref{thm: get orig params} to retrieve $W_{i,1}$. As $Y'$ just needs to reconstruct all batches from $X, r_i$ it holds that $K(Y') = O(\log n)$ (proved formally in the full version).
 \fi
Using the subadditivity and extra information properties of $K()$, together with the fact that $W_{1,1}$ can be reconstructed given $X,W_{t+1,1}, Y'$, we write the following: 
\begin{align*}
    &K(r, W_{1,1}) \leq  K(r,W_{1,1}, X,Y', W_{t+1,1})+ O(1) \leq K(W_{1,1,},X,W_{t+1,1},Y') + K(r\mid X,Y', W_{t+1,1}) + O(1)
    \\ &\leq K(X,W_{t+1,1}) + K(r\mid X,Y', W_{t+1,1}) + O(\log n)\leq K(X) + d + K(r\mid X,Y', W_{t+1,1}) + O(\log n) 
\end{align*}

First we note that:
$\forall i \in [t-1], K(r_{i}\mid X,Y', W_{i+2,1}, r_{i+1}) \leq K(r_{i}\mid X,Y', W_{i+1,1}) + O(1)$.
Where in the last inequality we simply execute $Y'$ on $X, W_{i+2,1}, r_{i+1}$ to get $W_{i+1,1}$.
 Let us write:
\begin{align*}
    &K(r_1r_2 \dots r_{t}\mid X,Y', W_{t+1,1}) \leq  K(r_t \mid X,Y', W_{t+1,1}) + K(r_1r_2 \dots r_{t-1}\mid X,Y', W_{t+1,1}, r_t) + O(1)
    \\ &\leq K(r_t \mid X, W_{t+1,1}) + K(r_1r_2 \dots r_{t-1}\mid X,Y', W_{t,1})+ O(1)
    \\ & \leq K(r_t \mid X, W_{t+1,1}) + K(r_{t-1} \mid X, W_{t,1}) +  K(r_1r_2 \dots r_{t-2}\mid X,Y', W_{t-1,1}) + O(1)
    \\& \leq \dots \leq O(t) + \sum^t_{k=1} K(r_k \mid X, W_{k+1,1}) \leq t[n(\log (n/e) - \frac{1}{512}\beta^3) + O(\log n)]
\end{align*}

Where the last inequality is due to Corollary~\ref{col: ri ub}.
Combining everything together we get that: $K(r) \leq K(X,W_{t+1,1}) + t[n(\log (n/e) - \frac{1}{512}\beta^3) + O(\log n)]$.

Our proof implies that we can reconstruct not only $r$, but also $W_{1,1}$ using $X, W_{t+1,1}$. Due to the incompressibility of random strings, we get that w.h.p $K(r, W_{1,1}) \geq d + tn\log (n/e) - O(\log n)$. 
Combining the lower and upper bound for $K(r, W_{1,1})$ we can get the following inequality, using which we can bound $t$. 
\begin{align*}
  &d + tn\log (n/e) - O(\log n)\leq K(X) +d + t[n(\log (n/e) - \frac{1}{512}\beta^3) + O(\log n)]
  \\ &\implies t(\frac{n}{512}\beta^3 - O(\log n)) \leq K(X)
  \implies t = O(\frac{K(X)}{n\beta^3}) \qedhere
\end{align*}
\end{proof}

\paragraph{Assumptions \ref{ass: local progress} and \ref{ass: compressible models} need not always hold}  Let us consider the case where both Assumption~\ref{ass: local progress} and Assumption~\ref{ass: compressible models} only hold for some fraction $\gamma$ of the $t$ epochs that we analyze. Let us denote by $J_g \subset [t]$ the set of all good epochs where the assumptions hold (the rest of the epochs are bad epochs).
This means that we can no longer achieve compression for bad epochs. However, for every bad epoch $i$, we can simply encode $r_i$ directly. Given $r_i$ we can reconstruct all of the batches, and retrieve $W_{i,1}$ from $W_{i+1,1}$ using Theorem~\ref{thm: get orig params}.  We can write that:
\begin{align*}
    K(r_i \mid X, W_{i+1,1}) \leq
\begin{cases}
    n(\log (n/e) - \frac{1}{512}\beta^3) + O(\log n),& \text{if } i\in J_g\\
    n(\log (n/e)) + O(\log n),              & \text{otherwise}
\end{cases}
\end{align*}

This allows us to bound $t$ as follows:
\begin{align*}
  &d + tn\log (n/e) - O(\log n)
  \\&\leq K(X)+ d + \gamma t[n(\log (n/e) - \frac{1}{512}\beta^3) + O(\log n)] + (1-\gamma) t[n(\log (n/e) ) + O(\log n)]
  \\ &\implies t(\gamma \frac{n}{512}\beta^3 - O(\log n)) \leq K(X) 
 \implies t = O(\frac{K(X)}{n\gamma \beta^3})
\end{align*}

Note that $t$ also appears in the batch size. By Assumption~\ref{ass: data size ub}, we know that $t=O(\beta^{-3} poly(n))$, which in turn means that $b = \Theta(\beta^{-2} \log^2 (n/\beta))$ is a sufficient condition for Assumption~\ref{ass: data size ub} to hold. 
We state our main theorem for this section:
\begin{theorem}
If assumptions (3)-(5) hold throughout the execution of SGD and assumptions (1)-(2) hold for an $\gamma$-fraction of the epochs, then SGD achieves an accuracy of $(1-\eps)$ on $X$ in $O(\frac{K(X)}{\gamma n (\eps \beta')^3})$ epochs w.h.p.
\end{theorem}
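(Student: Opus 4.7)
The plan is to derive the theorem as an almost immediate consequence of the preceding claim, combined with the termination condition built into Algorithm~\ref{alg: sgd}. Let $t^\star$ be the number of epochs SGD actually runs before terminating. For every epoch $i \le t^\star$ the termination check must have failed at every inner iteration, so $acc(W_{i,j},X) < 1-\eps$ throughout. Hence Assumptions~(1)--(5) hold for a consecutive block of $t^\star$ epochs and the entropy compression argument from the previous claim applies, giving $t^\star = O\!\left(\frac{K(X)+d}{n\beta^3}\right)$ w.h.p. Substituting $\beta = \eps\beta'$ yields the announced bound, and the termination condition itself guarantees that at the stopping step the accuracy reaches $1-\eps$.

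The heart of the argument therefore lives in the preceding claim, which I would re-derive by the following steps. First, lower bound $K(r) \ge tn\log(n/e) - O(\log n)$ w.h.p.\ via Theorem~\ref{thm: kolmogorov lb} and Stirling's estimate $\lceil \log n! \rceil = n\log(n/e) + O(\log n)$, where $r = r_1 \cdots r_t$ is the concatenation of the per-epoch permutation randomness. Second, define a reversibility wrapper $Y'$ of description length $O(\log n)$ that, given $(X, r_i, W_{i+1,1})$, reconstructs the epoch's batches from $r_i$ and iterates Theorem~\ref{thm: get orig params} to recover $W_{i,1}$. Third, chain subadditivity and the extra-information property to peel off one $r_i$ at a time from $K(r \mid X, Y', W_{t+1,1})$; each peel costs at most $n\log(n/e) - \frac{1}{512}n\beta^3 + O(\log n)$ by Corollary~\ref{col: ri ub}, which in turn splits over the two compression regimes (large $|\lambda'_{i,j}-\lambda''_{i,j}|$ in $[j_s,j_f]$ via Corollary~\ref{col: bound subset acc diff}, or uniformly small, handled by the local-progress-driven Claim~\ref{claim: bound k acc diff small}). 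Combining with $K(X) + K(W_{t+1,1}) \le K(X) + d + O(\log n)$ from Assumption~\ref{ass: data size ub} and comparing with the lower bound isolates $t$.

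The one remaining subtlety worth flagging is that Assumption~\ref{ass: batch lb} mentions $t$ inside the batch size, so the argument is technically a bootstrap: plug the derived $t = O(\beta^{-3}\mathrm{poly}(n))$ back into $b \ge 8\beta^{-2}\ln^2(tn^3)$ to obtain the closed-form $b = \Theta(\beta^{-2}\log^2(n/\beta))$ quoted in the introduction. This is harmless because $t$ enters only through a union-bound factor in Claim~\ref{claim: batch acc lb}, so one can first assume $t \le n^C$ for a fixed large $C$ permitted by Assumption~\ref{ass: data size ub}, run the argument, and observe that the resulting bound on $t$ is much stronger. The main obstacle in the whole chain was not this theorem itself but Claim~\ref{claim: bound k acc diff small}, which had to simultaneously (i) control the Hoeffding deviation of the batch accuracy from $\lambda''_{i,j}$, (ii) leverage the local-progress hypothesis to translate per-batch accuracy gains into a Cauchy--Schwarz sum of squared gaps $(\varphi_{i,j}-\lambda'_{i,j})^2$, and (iii) convert these gaps into bits saved through Lemma~\ref{lem: bits for set}; the present theorem only has to apply the resulting bound.
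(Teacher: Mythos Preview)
Your proposal is correct and matches the paper's approach: the theorem is stated immediately after the preceding claim and is nothing more than that claim with the substitution $\beta=\eps\beta'$, together with the observation that termination of Algorithm~\ref{alg: sgd} is precisely the event $acc(W_{i,j},X)\ge 1-\eps$. Your recap of the entropy-compression chain (incompressibility lower bound on $r$, the reversibility wrapper $Y'$, peeling off the $r_i$ via Corollary~\ref{col: ri ub}, and the bootstrap for the batch size) faithfully mirrors the paper's derivation of the preceding claim and the surrounding discussion.
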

\iffull
\subsection{Connection between $\epsilon$ and $\beta$} 
\label{sec: connection eps beta}
We note that the local progress assumption does not make sense when the accuracy is too close to 1. That is, if the accuracy is too high, there is no room for improvement. We formalize this intuition in the following claim:
\begin{claim}
If for some $i$ it holds that $\forall j, \lambda_{i,j}\geq (1-\eps)$, then $\beta \leq \frac{4}{3}\eps \ln (n/b)$.
\end{claim}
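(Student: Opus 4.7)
The plan is to upper-bound the one-epoch sum $S := \sum_{j=2}^{n/b+1}(\varphi_{i,j} - acc(W_{i,j-1},B_{i,j-1}))$ and compare the result to the lower bound $S > \beta n/b$ provided by Assumption~\ref{ass: local progress}. Writing $a_j := acc(W_{i,j-1},B_{i,j-1})$, the first step is the trivial bound $\varphi_{i,j} \leq 1$, which reduces the task to upper-bounding $\sum_j(1-a_j)$ under the hypothesis $\forall j,\ \lambda_{i,j} \geq 1-\eps$.

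Next I would exploit $\lambda_{i,j-1}\geq 1-\eps$ to control $acc(W_{i,j-1},\cdot)$ on the residual pool. Since $W_{i,j-1}$ misclassifies at most $\eps n$ elements of $X$, it misclassifies at most $\eps n$ of the $n-(j-2)b$ elements in $X\setminus X_{i,j-2}$, and hence
$$acc(W_{i,j-1}, X\setminus X_{i,j-2}) \geq 1 - \frac{\eps n}{n-(j-2)b}.$$
Because $B_{i,j-1}$ is a uniformly random size-$b$ subset of $X\setminus X_{i,j-2}$ that is independent of $W_{i,j-1}$ (the same independence already used in Claim~\ref{claim: batch acc lb}), I would apply Theorem~\ref{thm: hoeffding no rep} to the indicators of correct classification and take a union bound over the $n/b$ batches. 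Assumption~\ref{ass: batch lb} keeps $b$ polylogarithmic in $n$, so the resulting Hoeffding deviation $\delta$ is $o(1)$ w.h.p., and for every $j$,
$$1 - a_j \leq \frac{\eps n}{n-(j-2)b} + \delta.$$

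Summing and reindexing by $m = n/b - (j-2)$ converts the main term into a harmonic sum:
$$S \;\leq\; \frac{\eps n}{b}\sum_{m=1}^{n/b}\frac{1}{m} + \frac{n\delta}{b} \;=\; \frac{\eps n}{b}H_{n/b} + \frac{n\delta}{b}.$$
Dividing by $n/b$ and invoking Assumption~\ref{ass: local progress} yields $\beta < \eps H_{n/b} + \delta \leq \eps(\ln(n/b)+1) + o(1)$. Since $b$ is polylogarithmic in $n$ we have $n/b \geq e^3$, hence $\ln(n/b)+1 \leq \tfrac{4}{3}\ln(n/b)$, giving the claimed $\beta \leq \tfrac{4}{3}\eps\ln(n/b)$.

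The hard part will be cleanly justifying the concentration step: one must argue that Theorem~\ref{thm: hoeffding no rep} applies to each pair $(W_{i,j-1},B_{i,j-1})$ despite the within-epoch correlation of batches (the key being that conditional on the prefix of the permutation through $X_{i,j-2}$, the next batch is uniform on the remaining $n-(j-2)b$ elements and $W_{i,j-1}$ is a deterministic function of that prefix alone), and the slack $\delta$ must be absorbed into the small gap between $\ln(n/b)+1$ and $\tfrac{4}{3}\ln(n/b)$. Everything else — the one-line $\varphi_{i,j}\leq 1$ reduction and the harmonic-sum computation — is routine.
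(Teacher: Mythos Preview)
Your approach is essentially the same as the paper's: bound $\varphi_{i,j}\le 1$, replace $acc(W_{i,j-1},B_{i,j-1})$ by $\lambda''_{i,j-1}$ via concentration, lower-bound $\lambda''_{i,j-1}$ from $\lambda_{i,j-1}\ge 1-\eps$ by counting misclassified elements in the residual pool, and sum to a harmonic series. The only cosmetic difference is that the paper does not redo the Hoeffding argument inline: it cites Claim~\ref{claim: batch acc lb} to get $acc(W_{i,j-1},B_{i,j-1})\ge\lambda''_{i,j-1}-\beta/4$ directly, so the concentration slack appears as the clean multiplicative $3\beta/4$ (yielding the factor $4/3$) rather than as an additive $\delta$ to be absorbed at the end. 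Your extra step of checking $1+\ln(n/b)\le\tfrac{4}{3}\ln(n/b)$ is in fact needed to match the claim as stated, since the paper's own argument actually ends at $\beta\le\tfrac{4}{3}\eps(1+\ln(n/b))$.
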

\begin{proof}

Recall the local progress assumption: $\frac{b}{n}\sum_{j=2}^{n/b+1} (\varphi_{i,j} - acc(W_{i,j-1}, B_{i,j-1})) > \beta$. Applying Claim~\ref{claim: batch acc lb} we know that w.h.p $acc(W_{i,j-1}, B_{i,j-1}) \geq \lambda''_{i,j-1} -\beta/4$. So we can write:
$$\beta < \frac{b}{n}\sum_{j=2}^{n/b+1} (\varphi_{i,j} - \lambda''_{i,j-1} + \beta/4) \leq \beta/4 + \frac{b}{n}\sum_{j=2}^{n/b+1} (1 - \lambda''_{i,j-1})$$


It holds that $\lambda_{i,j} \geq (1-\eps)$. Let us now bound $\lambda''_{i,j}$ using $\lambda_{i,j}$. We can write the following for $j\in[2,n/n+1]$:
\begin{align*}
    & n\lambda_{i,j} = \sum_{x\in X} acc(W_{i,j}, x) = \sum_{x\in X_{i,j-1}} acc(W_{i,j}, x) + \sum_{x\in X\setminus X_{i,j-1}} acc(W_{i,j}, x) 
    \\&= (j-1)b \lambda'_{i,j} + (n-(j-1)b) \lambda''_{i,j}
    \\ &\implies \lambda''_{i,j} = \frac{n\lambda_{i,j} - (j-1)b\lambda'_{i,j}}{n-(j-1)b} 
    \geq \frac{n(1-\eps) - (j-1)b}{n-(j-1)b}= 1 - \frac{\eps n}{n-(j-1)b}
\end{align*}
Plugging the above into the first inequality we get:
\begin{align*}
    &\frac{3\beta}{4} \leq 
    \frac{b}{n}\sum_{j=2}^{n/b+1} \frac{\eps n}{n-(j-2)b } \leq \sum_{j=2}^{n/b+1} \frac{\eps }{n/b-(j-2) } \leq \eps (1+\ln (n/b)) 
\end{align*}

Finally we get $\beta \leq \frac{4}{3}\eps (1+\ln (n/b))$.
\end{proof}

Taking the above into consideration, we parameterize $\beta$ via $\epsilon$ and write $\beta = \beta' \epsilon$. 
This parameterization is correct in the sense that it allows the accuracy parameter $\epsilon$ to appear in our running time bound.
\fi
\bibliography{paper}
\appendix

\iffull \else
\section*{Checklist}

\begin{enumerate}

\item For all authors...
\begin{enumerate}
  \item Do the main claims made in the abstract and introduction accurately reflect the paper's contributions and scope?
    \answerYes{}
  \item Did you describe the limitations of your work?
    \answerYes{}
  \item Did you discuss any potential negative societal impacts of your work?
    \answerNA{}
  \item Have you read the ethics review guidelines and ensured that your paper conforms to them?
    \answerYes{}
\end{enumerate}

\item If you are including theoretical results...
\begin{enumerate}
  \item Did you state the full set of assumptions of all theoretical results?
    \answerYes{}
        \item Did you include complete proofs of all theoretical results?
    \answerNo{} Full proofs appear in the full version (submitted as supplemental material)
\end{enumerate}

\item If you ran experiments...
\begin{enumerate}
  \item Did you include the code, data, and instructions needed to reproduce the main experimental results (either in the supplemental material or as a URL)?
    \answerNA{}
  \item Did you specify all the training details (e.g., data splits, hyperparameters, how they were chosen)?
    \answerNA{}
        \item Did you report error bars (e.g., with respect to the random seed after running experiments multiple times)?
    \answerNA{}
        \item Did you include the total amount of compute and the type of resources used (e.g., type of GPUs, internal cluster, or cloud provider)?
    \answerNA{}
\end{enumerate}

\item If you are using existing assets (e.g., code, data, models) or curating/releasing new assets...
\begin{enumerate}
  \item If your work uses existing assets, did you cite the creators?
    \answerNA{}
  \item Did you mention the license of the assets?
    \answerNA{}
  \item Did you include any new assets either in the supplemental material or as a URL?
    \answerNA{}
  \item Did you discuss whether and how consent was obtained from people whose data you're using/curating?
    \answerNA{}
  \item Did you discuss whether the data you are using/curating contains personally identifiable information or offensive content?
    \answerNA{}
\end{enumerate}

\item If you used crowdsourcing or conducted research with human subjects...
\begin{enumerate}
  \item Did you include the full text of instructions given to participants and screenshots, if applicable?
    \answerNA{}
  \item Did you describe any potential participant risks, with links to Institutional Review Board (IRB) approvals, if applicable?
    \answerNA{}
  \item Did you include the estimated hourly wage paid to participants and the total amount spent on participant compensation?
    \answerNA{}
\end{enumerate}

\end{enumerate}
\fi
\end{document}